\def\BibTeX{{\rm B\kern-.05em{\sc i\kern-.025em b}\kern-.08em
    T\kern-.1667em\lower.7ex\hbox{E}\kern-.125emX}}
\newcolumntype{C}[1]{>{\centering\arraybackslash}p{#1}}
\renewcommand{\algorithmicrequire}{\textbf{Input:}}
\renewcommand{\algorithmicensure}{\textbf{Output:}}
\newtheorem{theorem}{Theorem}
\newcommand{\ours}[0]{$\text{MMC}$}
\begin{document}

\title{A Self-Organizing Tensor Architecture for Multi-View Clustering}

\author{\IEEEauthorblockN{
      Lifang He\IEEEauthorrefmark{1},
      Chun-Ta Lu\IEEEauthorrefmark{2},
      Yong Chen\IEEEauthorrefmark{3},
      Jiawei Zhang\IEEEauthorrefmark{4},
      Linlin Shen\IEEEauthorrefmark{5},
      Philip S. Yu\IEEEauthorrefmark{2},\IEEEauthorrefmark{6} and
      Fei Wang\IEEEauthorrefmark{1}}
\IEEEauthorblockA{\IEEEauthorrefmark{1}Weill Cornell Medicine; \IEEEauthorrefmark{2}University of Illinois at Chicago; \IEEEauthorrefmark{3}University of Pennsylvania; \IEEEauthorrefmark{4}Florida State University; \\ \IEEEauthorrefmark{5}Shenzhen University; \IEEEauthorrefmark{6}Shanghai Institute for Advanced Communication and Data Science, Shanghai Key Laboratory \\ of Data Science, Fudan University. Email: \{lifanghescut, lucangel\}@gmail.com; ychen123@pennmedicine.upenn.edu; \\
jiawei@ifmlab.org; llshen@szu.edu.cn; psyu@uic.edu; few2001@med.cornell.edu}
}


\maketitle

\begin{abstract}
In many real-world applications, data are often unlabeled and comprised of different representations/views which often provide information complementary to each other. 
Although several multi-view clustering methods have been proposed, most of them routinely assume one weight for one view of features, and thus inter-view correlations are only considered at the view-level. These approaches, however, fail to explore the explicit correlations between features across multiple views. 
In this paper, we introduce a tensor-based approach to incorporate the higher-order interactions among multiple views as a tensor structure. Specifically, we propose a multi-linear multi-view clustering (MMC) method that can efficiently explore the full-order structural information among all views and reveal the underlying subspace structure embedded within the tensor. Extensive experiments on real-world datasets demonstrate that our proposed MMC algorithm clearly outperforms other related state-of-the-art methods.
\end{abstract}

\begin{IEEEkeywords}
Multi-view clustering, tensor, tensor decomposition, regression
\end{IEEEkeywords}

\section{Introduction}\label{sec:intro}
In many applications, data are naturally comprised of multiple representations or \emph{views}. For example, images on the web have textual descriptions associated with them; one document may be translated into multiple different languages. Observing that different views often provide compatible and complementary information, it appears natural to integrate them together for better performance rather than relying on a single view. The key of learning from multiple views is to leverage the interactions and correlations between views in order to outperform simply concatenating views. As multi-view data without labels are plentiful in real world applications, the problem of unsupervised learning from unlabeled multi-view data has attracted considerable attention in recent years~\cite{chao2017survey,sun2017sequential,shao2016online}, referred to as multi-view clustering.

Most of existing multi-view clustering algorithms are essentially extended from classical single-view clustering algorithms, such as  spectral clustering and K-means clustering. 
For example, a co-regularized multi-view spectral clustering~\cite{coregSC} is developed by keeping the consistency to the same clustering across all of similarity graphs. Some works~\cite{shao2016multi,AMGL} adaptively learn weights to differ the reliability of different views for each graph during the optimization. 
Although these multi-view spectral clustering algorithms can achieve promising performance, they still have two main drawbacks: First, these algorithms generally need to build a proper similarity graph for each view. The construction of the similarity graph is a key issue involving many factors, such as the choice of kernels and their parameters. These factors may greatly affect the final clustering performance. Second, due to the heavy computation of similarity graphs, these graph based methods cannot effectively tackle large-scale multi-view clustering problems. 

In contrast, multi-view K-means clustering approaches are more practically useful to deal with large-scale data since they are free from constructing similarity graphs. This kind of methods is originally derived from the non-negative matrix factorization (NMF) with orthogonality constraints, which is equivalent to relaxed K-means clustering~\cite{ding2005equivalence}. \cite{multiNMF} proposed a joint NMF based multi-view clustering algorithm via seeking for a factorization that gives compatible clustering solutions across multiple views. 
~\cite{RMKMC} proposed the robust multi-view K-means clustering by using the structured sparsity-inducing norm, $\ell_{2,1}$-norm, to replace the Frobenius-norm in the K-means clustering objective and learning individual weight for each view. After \cite{SEC} showed that a regression-like clustering objective is equivalent to the Discriminative K-means, ~\cite{wang2013multi} incorporated the group $\ell_1$-norm together with the $\ell_{2,1}$-norm as joint structured regularization terms in the objective to better capture the view-wise relationships among data. 
However, these algorithms are performed in the original feature space without any discriminative subspace learning mechanism that may render curse of dimensionality when dealing with multi-view and high dimensional data. 

Furthermore, most previous multi-view clustering approaches assume one weight for one view of features, and thus inter-view correlations are only considered at the view-level. These approaches, however, fail to explore the explicit correlations between features across multiple views. Recently, several researchers proposed the use of tensor analysis method to address multi-view clustering problems \cite{SCMV_3DT,cao2017t,sun2018mega}, and it is reported to achieve competitive performance compared with conventional multi-view clustering algorithms. However, existing methods mainly focus on the third-order tensor representation, while the higher-order structural information among all views has not been fully exploited. In this paper, we propose a higher-order tensor based Multi-linear Multi-view Clustering (MMC) method for exploring the unsupervised heterogeneous data fusion and clustering, which can take all the possible interactions between views and clusters into account, ranging from the first-order interactions to the highest order interactions.



\section{Preliminary}\label{sec:prelim}


\subsection{Tensor Algebra and Notation}\label{sec:algebra}
We denote scalars by lowercase letters, e.g., $x$; vectors by boldfaced lowercase letters, e.g., $\mathbf{x}$; matrices by boldface uppercase letters, e.g., $\mathbf{X}$; and tensors by calligraphic letters, e.g., $\mathcal{X}$. We denote their entries by $x_i$, $x_{i,j}$, $x_{i,j,k}$, etc., depending on the number of modes. All vectors are column vectors unless otherwise specified. For an arbitrary matrix $\mathbf{X} \in \mathbb{R}^{I \times J}$, its $i$-th row and $j$-th column vector are denoted by $\mathbf{x}^{i}$ and $\mathbf{x}_{j}$, respectively. Its Frobenius norm is defined by $\left \| \mathbf{X} \right \|_F = \sqrt{\sum_{i=1}^{I} \left \|\mathbf{x}^{i} \right \|_2^2}$, and its $\ell_{2,1}$ norm is defined by $\left \| \mathbf{X} \right \|_{2,1} = \sum_{i=1}^{I} \left \| \mathbf{x}^{i} \right\|_2$. 
Additionally, $tr(\cdot)$ denotes the trace function, $vec(\cdot)$ denotes the column stacking operator, $diag(\cdot)$ denotes a diagonal matrix formed from its vector argument, $\ast$ denotes the Hadamard (elementwise) product, and $\otimes$ denotes the Kronecker product. 

An important property of Hadamard product is $\mathbf{a} \ast \mathbf{b} = diag(\mathbf{a}) \mathbf{b}$. 
An important application of Kronecker product is to rewrite the matrix equation $\mathbf{A} \mathbf{X} \mathbf{B} = \mathbf{C} $ into the equivalent vector equation $(\mathbf{B}^\mathrm{T} \otimes \mathbf{A}) vec(\mathbf{X}) = vec(\mathbf{C}) $. 
The inner product of two tensors $\mathcal{X}, \mathcal{Y} \in \mathbb{R}^{I_1\times\cdots\times I_M}$ is defined by $\big \langle \mathcal{X},\mathcal{Y} \big \rangle = \sum\limits_{i_1=1}^{I_1} \cdots \sum\limits_{i_M=1}^{I_M} x_{i_1, \cdots, i_M} y_{i_1, \cdots, i_M}$.
The outer product of vectors $\mathbf{x}^{(m)} \in \mathbb{R}^{I_m}$ for $m \in [1:M]$ is an $M$-th order tensor and defined elementwise by $\big (\mathbf{x}^{(1)} \circ \cdots \circ \mathbf{x}^{(M)}\big)_{i_1, \cdots, i_M} = x^{(1)}_{i_1}  \cdots x^{(M)}_{i_M} = \prod_{m=1}^M x_{i_m}^{(m)}$ for all values of the indices. In particular, for $\mathcal{X}=\mathbf{x}^{(1)} \circ \cdots \circ \mathbf{x}^{(M)}$ and $\mathcal{Y}=\mathbf{y}^{(1)} \circ \cdots \circ \mathbf{y}^{(M)}$, it holds that
\begin{align}
\big \langle \mathcal{X},\mathcal{Y} \big \rangle = \prod_{m=1}^{M} \big \langle \mathbf{x}^{(m)}, \mathbf{y}^{(m)} \big \rangle = \prod_{m=1}^{M} {\mathbf{x}^{(m)}}^\mathrm{T} \mathbf{y}^{(m)}.
\label{eq:inner_rank1}
\end{align}

\subsection{Problem Definition}
In the setting of multi-view clustering, assume we are given a dataset with $N$ instances in $V$
views $\{ \mathbf{X}^{(v)}\}_{v=1}^V$, where $\mathbf{X}^{(v)} = \big[\mathbf{x}^{(v)}_1, \dots, \mathbf{x}^{(v)}_N \big] \in \mathbb{R}^{D_v \times N}$ denotes the feature matrix of the $v$-th view and $D_v$ is the dimensionality of the $v$-th view. 
Our goal is to partition $N$ instances into $K$ clusters by exploiting the information in all $V$ different views of the data. Specifically, we aim to accurately perform clustering in the joint space of the multiple views, while focusing explicitly on inter-view interactions by virtue of tensor manipulation.
\section{Multi-Linear Multi-View Clustering} \label{sec:method}
Clustering is unsupervised and exploratory in nature. Yet, it can be performed through penalized regression with grouping pursuit. The linear regression-based clustering models often have the following general form of objective function \cite{wang2013multi}:
\begin{equation}
\min_{\mathbf{W}, \mathbf{F}^{\mathrm{T}}\mathbf{F} = \mathbf{I}} \| \mathbf{X}^\mathrm{T}\mathbf{W} + \mathbf{1}_N \mathbf{b}^\mathrm{T} - \mathbf{F}\|^2_F, 
\label{eq:regCluster_b}
\end{equation}
where $\mathbf{X} = \big[\mathbf{x}_1, \dots, \mathbf{x}_N\big] \in \mathbb{R}^{D \times N}$ is a data matrix with $D$ features and $N$ samples, $\mathbf{W} \in \mathbb{R}^{D \times K}$ is the weight matrix (a.k.a. projection matrix), $\mathbf{b} \in \mathbb{R}^{K}$ is the bias vector, $\mathbf{1}_N \in \mathbb{R}^N$ is the constant vector of all ones, $\mathbf{F}= \big[\mathbf{f}_1, \dots, \mathbf{f}_K\big] \in \mathbb{R}^{N\times K}$ is the cluster indicator matrix, and $f_{n,k}$ indicating how likely the $n$-th instance belongs to the $k$-th cluster.

Previous work \cite{SEC} showed the regression-based clustering objective of Eq.~(\ref{eq:regCluster_b}), which is equivalent to the Discriminative $K$-means, obtains better results than $K$-means or spectral clustering methods. Thus, it is desirable to extend regression method to conduct multi-view clustering. However, due to the self-limitation of linear regression, it is not effective to directly use it for multi-view clustering, which only captures linear correlations between views, and neglects all interactions between multiple views. Here we sought to design a regression-based Multi-linear Multi-view Clustering (MMC) method to fuse all possible dependence relationships among different views and clusters, thus result in more accurate and interpretable model.

\subsection{Model}
Let $\mathbf{z}_n = [\mathbf{x}_n; 1] \in \mathbb{R}^{D+1}$ for $n \in [1:N]$ and $\mathbf{W} = [\mathbf{W}; \mathbf{b}^\mathrm{T}] \in \mathbb{R}^{(D+1) \times K}$, then the bias $\mathbf{b}$ of Eq.~(\ref{eq:regCluster_b}) can be absorbed to $\mathbf{W}$. Eq.~(\ref{eq:regCluster_b}) can thus be rewritten as follows:
\begin{equation}
\| \mathbf{Z}^\mathrm{T}\mathbf{W} - \mathbf{F}\|^2_F = \sum_{n=1}^{N}\sum_{k=1}^{K} (\mathbf{z}_n^\mathrm{T}\mathbf{w}_k - f_{n,k})^2,
\label{eq:regCluster}
\end{equation}
where $\mathbf{Z} = \big[\mathbf{z}_1, \dots, \mathbf{z}_N\big] \in \mathbb{R}^{(D+1) \times N}$, $\mathbf{w}_k$ is the $k$-th column vector of $\mathbf{W}$, and $f_{n,k}$ is the $n$-th row and the $k$-th column of the matrix $\mathbf{F}$. From the regression point of view, we note that $\hat{f}_{n,k}$ can be explicitly expressed as
\begin{align}
\hat{f}_{n,k} = \mathbf{z}_n^\mathrm{T}\mathbf{w}_k = 
\langle \mathbf{e}_k^\mathrm{T}, \mathbf{z}_n^{\mathrm{T}} \mathbf{W} \rangle
= \left\langle \mathbf{z}_n \circ \mathbf{e}_k, \mathbf{W} \right\rangle,
\label{eq:reg_fun}
\end{align}
where $\mathbf{e}_k \in \mathbf{R}^{K}$ is the cluster/class indicator vector and defined as follows:
\[
\mathbf{e}_k = \underbrace{[0, \cdots, 0}_\text{k-1}, 1, 0, \cdots, 0 ]^\mathrm{T} \in \mathbb{R}^K.
\]

On the other hand, by means of the outer product, each multi-view data point $\{\mathbf{x}_n^{(v)}\}_{v=1}^V$ are able to be incorporated into a tensor representation in the form $\mathbf{x}_n^{(1)} \circ\cdots \circ \mathbf{x}_n^{(V)}$ \cite{cao2014tensor}. Crucially, by adding an extra constant feature to each view data $\mathbf{x}_n^{(v)}$, i.e., $\mathbf{z}_n^{(v)} = [\mathbf{x}_n^{(v)}; 1] \in \mathbb{R}^{D_v + 1}$ for $v \in [1:V]$, this paradigm can take advantage of different orders of interactions between views \cite{lu2017multilinear}. Thus, following this recipe, we define the tensor representation to exploit all interactions between views by
\[
\mathcal{Z}_n = \mathbf{z}^{(1)}_n \circ\cdots \circ \mathbf{z}^{(V)}_n \in \mathbb{R}^{(D_1+1) \times \cdots \times (D_V+1)},
\]
where elementwise $(\mathcal{Z}_n)_{d_1, \cdots, d_V} = \prod_{v=1}^V z_{n d_v}^{(v)}$, and $z_{n d_v}^{(v)}$ is the $d_v$-th element of $\mathbf{z}^{(v)}_n$.

Using the above input, we can write Eq.~(\ref{eq:reg_fun}) as
\begin{align}
\hat{f}_{n,k} = \left\langle \mathcal{Z}_n \circ \mathbf{e}_k,\mathcal{W} \right\rangle,
\label{eq:nest}
\end{align}
where $\mathcal{W}=\{w_{d_1,...,d_{V},k}\} \in \mathbb{R}^{(D_1+1) \times\cdots\times (D_{V}+1) \times K}$ is the weight tensor to be learned. Notice that $w_{d_1,\dots,d_{V},k}$ with some indexes satisfying $d_v=D_v + 1$ encodes lower-order interactions between other views except the $v$-th view.

Without any additional assumptions on $\mathcal{W}$, the above model is apparently severely over-parameterized. Besides, the inter-view and inter-cluster relationships in different orders are not jointly explored, as the weight parameters are modeled from the independent sum of each order. The merit of the above formulation lies in some suitable low-dimensional structures imposed on $\mathcal{W}$. We assume that $\mathcal{W}$ has a low rank and can be factorized by CP decomposition \cite{kolda2009tensor} as
\begin{equation}
    \mathcal{W} = \sum_{r=1}^{R} \mathbf{w}_{r}^{(1)} \circ \cdots \circ \mathbf{w}_{r}^{(V+1)} = \llbracket \mathbf{W}^{(1)}, \cdots, \mathbf{W}^{(V+1)} \rrbracket,
\end{equation}
where $\mathbf{W}^{(v)}= [\mathbf{w}_{1}^{(v)}, \cdots, \mathbf{w}_{R}^{(v)}] \in \mathbb{R}^{(D_v+1) \times R}$ for $v \in [1:V]$, and $\mathbf{W}^{(V+1)}= [\mathbf{w}_{1}^{(V+1)}, \cdots, \mathbf{w}_{R}^{(V+1)}] \in \mathbb{R}^{K \times R}$.


Then Eq.~(\ref{eq:nest}) can be transformed into
\begin{align}
\hat{f}_{n,k} = \sum_{r=1}^{R} \big\langle \mathcal{Z}_n \circ \mathbf{e}_k, \mathbf{w}_{r}^{(1)} \circ \cdots \circ \mathbf{w}_{r}^{(V+1)} \big\rangle.
\label{eq:nest_W_decomp}
\end{align}


According to Eq.~(\ref{eq:inner_rank1}), we can further rewrite Eq.~(\ref{eq:nest_W_decomp}) as
\begin{align}
& \hat{f}_{n,k} =\sum_{r=1}^{R}  w_{k,r}^{(V+1)} \big( \mathbf{z}_n^{(1)^\mathrm{T}} \mathbf{w}_{r}^{(1)} \big) \cdots \big( \mathbf{z}_n^{(V)^\mathrm{T}} \mathbf{w}_{r}^{(V)} \big) \nonumber \\
&= \big( \big( \mathbf{z}_n^{(1)^\mathrm{T}} \mathbf{W}^{(1)} \big) \ast \cdots \ast \big( \mathbf{z}_n^{(V)^\mathrm{T}} \mathbf{W}^{(V)} \big)\big) {\mathbf{w}^{(V+1)k}}^\mathrm{T},
\label{eq:mmvc_no_regular}
\end{align}
where $\ast$ is the Hadamard (elementwise) product, and $\mathbf{W}^{(v)}$ for $v \in [1:V]$ can be considered as the projection matrix that projects the features from $v$-th view into the common subspace. It should be noted that the last row of $\mathbf{W}^{(v)}$ is always associated with $z_{n(D_v+1)}^{(v)}=1$ and represents the bias factors of the $v$-th view. Through the bias factors, the lower-order interactions are explored in the study.

By plugging Eq.~(\ref{eq:mmvc_no_regular}) into Eq.~(\ref{eq:regCluster}), we have
\begin{align}
\sum_{n=1}^{N}\sum_{k=1}^{K} (\hat{f}_{n,k} - f_{n,k})^2 = \big\| \big( \prod_{v=1}^{V} \ast \big( \mathbf{Z}^{(v)^\mathrm{T}} \mathbf{W}^{(v)} \big)\big) {\mathbf{W}^{(V+1)}}^{\mathrm{T}}  - \mathbf{F}\big\|^2_F, \nonumber
\end{align}
where $\mathbf{Z}^{(v)} = \big[\mathbf{z}_1^{(v)}, \dots, \mathbf{z}_N^{(v)}\big] \in \mathbb{R}^{(D_v+1) \times N}$ is the feature matrix of view $v$, and $\mathbf{W}^{(V+1)}$ corresponds to the weight matrix of all clusters. It is clear that the result becomes a genuine joint and self-organizing model, as it enables full-order interaction information borrowing and sharing among the views and clusters.

Moreover, considering that multi-view data are known to contain redundant features, we add an $\ell_{2,1}$-norm regularization on each weight matrix $\mathbf{W}^{(v)}$. The $\ell_{2,1}$ norm promotes row-sparsity, such property makes it suitable for the task of feature selection~\cite{nie2010efficient}. Finally, we design our objective function of {\ours} as follows: 
\begin{align}
\min_{\{\mathbf{W}^{(v)}\}, \mathbf{F}^{\mathrm{T}} \mathbf{F} = \mathbf{I}}~
& \big\| \big( \prod_{v=1}^{V} \ast \big( \mathbf{Z}^{(v)^\mathrm{T}} \mathbf{W}^{(v)} \big)\big) {\mathbf{W}^{(V+1)}}^{\mathrm{T}}  - \mathbf{F}\big\|^2_F \nonumber \\
& + \gamma \sum_{v=1}^{V+1} \| \mathbf{W}^{(v)}\|_{2,1}.
\label{eq:mmvc_obj}
\end{align}
Where $\gamma$ is the regularized parameter. For convenience, below we let $\mathbf{\Pi} = \prod_{v=1}^{V} \ast (\mathbf{Z}^{(v)^\mathrm{T}} \mathbf{W}^{(v)}) \in \mathbb{R}^{N \times R}$ denote the embedding matrix from all the views, and $\mathbf{\Pi}^{(-v)} = \prod_{v'\neq v}^{V} \ast (  \mathbf{Z}^{(v')^\mathrm{T}} \mathbf{W}^{(v')} ) \in \mathbb{R}^{N \times R}$ denote the embedding matrix from all the other views except the $v$-th view.

It is easy to see that the parameters of the full-order interactions between multiple clusters with multiple views are jointly factorized in Eq.~(\ref{eq:mmvc_obj}). This forms the basis of our MMC model to fuse all possible dependence relationships among different views and different clusters. Another appealing property of {\ours} comes from the main characteristics of multi-linear analysis. After factorizing the parameter tensor $\mathcal{W}$, this appealing latent model setup enables dimension/parameter reduction and induces dependency among the views and different orders, with no need to explicitly introduce the cluster indicator $\mathbf{e}_k$ and construct a new tensor. In particular, the model complexity is reduced from $O(K\prod_{v=1}^V (D_v+1))$ to $O(R(V + K + \sum_{v=1}^V (D_v+1))$, which is linear in the number of original features.

\subsection{Estimation}
The objective function in Eq.~(\ref{eq:mmvc_obj}) is non-convex, and solving for the global minimum is difficult in general. Therefore we derive an efficient iterative algorithm to reach the local optimum, by alternatively minimizing Eq.~(\ref{eq:mmvc_obj}) for each variable while fixing the other. The overall algorithm is summarized in Algorithm \ref{alg:MMVC}.

\textbf{Update $\mathbf{W}^{(v)}(1\leq v \leq V)$}: By keeping all other variables fixed and minimizing over $\mathbf{W}^{(v)}$, we need to solve the following problem:
\begin{align}
    & \min_{\mathbf{W}^{(v)}}~ \| ( \mathbf{\Pi}^{(-v)} \ast (\mathbf{Z}^{(v)^\mathrm{T}} \mathbf{W}^{(v)})) {\mathbf{W}^{(V+1)}}^{\mathrm{T}}  - \mathbf{F}\|^2_F \nonumber \\
    & + \gamma \| \mathbf{W}^{(v)}\|_{2,1}.
\label{eq:opti_Wv}
\end{align}
Due to the nonsmooth regularization term of $\ell_{2,1}$-norm, it is not easy to solve Eq.~(\ref{eq:opti_Wv}) exactly. Inspired by \cite{nie2010efficient}, we instead solve the following problem iteratively to approximate the solution of Eq.~(\ref{eq:opti_Wv}):
\begin{align}
\mathcal{J} = & \min_{\mathbf{W}^{(v)}} \| ( \mathbf{\Pi}^{(-v)} \ast (\mathbf{Z}^{(v)^\mathrm{T}} \mathbf{W}^{(v)})) {\mathbf{W}^{(V+1)}}^{\mathrm{T}} - \mathbf{F}\|^2_F \nonumber \\
& + \gamma tr ({\mathbf{W}^{(v)}}^\mathrm{T} \mathbf{P}^{(v)} \mathbf{W}^{(v)}),
\label{eq:reform_Wv}
\end{align}
where $\mathbf{P}^{(v)}$ is a diagonal matrix with the $i$-th diagonal element as $p^{(v)}_{ii} = \frac{1}{2 \|\mathbf{w}^{(v)i} \|_2}$.



Taking the derivative of Eq.~(\ref{eq:reform_Wv}) with respect to $\mathbf{W}^{(v)}$ and setting it to zero, we have:
\begin{align}
& \mathbf{Z}^{(v)} (\mathbf{\Pi}^{(-v)} \ast (( \mathbf{\Pi}^{(-v)} \ast (\mathbf{Z}^{(v)^\mathrm{T}} \mathbf{W}^{(v)})) {\mathbf{W}^{(0)}}^{\mathrm{T}}\mathbf{W}^{(V+1)})) \nonumber \\
& - \mathbf{Z}^{(v)}( \mathbf{\Pi}^{(-v)} \ast ( \mathbf{F} \mathbf{W}^{(0)} ) )  + \gamma \mathbf{P}^{(v)} \mathbf{W}^{(v)} = \mathbf{0}.
\label{eq:gred_Wv}
\end{align}
To solve Eq.~(\ref{eq:gred_Wv}) w.r.t. $\mathbf{W}^{(v)}$, we state the following theorem.

\begin{theorem}\label{theo:LS}
Given any matrices $\mathbf{A} \in \mathbb{R}^{M \times N}$, $\mathbf{B} \in \mathbb{R}^{N \times R}$, $\mathbf{C} \in \mathbb{R}^{R \times R}$, $\mathbf{D} \in \mathbb{R}^{M \times M}$ and $\mathbf{E} \in \mathbb{R}^{M \times R}$, the solution of the following equation
\begin{equation}
\mathbf{A} (\mathbf{B} \ast (( \mathbf{B} \ast (\mathbf{A}^{\mathrm{T}} \mathbf{X})) \mathbf{C})) + \mathbf{D} \mathbf{X} = \mathbf{E}
\label{eq:P1}
\end{equation}
is equivalent to the solution of the linear system in the form 
\begin{equation}
\mathbf{H} vec(\mathbf{X}) = vec(\mathbf{E}),
\label{eq:linear}
\end{equation}
where $\mathbf{H} = \mathbf{I} \otimes \mathbf{D} + (\mathbf{I} \otimes \mathbf{A}) diag(vec(\mathbf{B})) (\mathbf{C}^\mathrm{T}\otimes\mathbf{I}) diag(vec( \mathbf{B})) (\mathbf{I} \otimes \mathbf{A})$.
\end{theorem}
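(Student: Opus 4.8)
The plan is to vectorise both sides of Eq.~(\ref{eq:P1}) and identify the linear operator that multiplies $vec(\mathbf{X})$. Two elementary facts carry the whole argument. The first is the Kronecker identity recalled in Section~\ref{sec:algebra}: for conformable matrices $vec(\mathbf{P}\mathbf{Q}\mathbf{S}) = (\mathbf{S}^\mathrm{T}\otimes\mathbf{P})\,vec(\mathbf{Q})$; in particular $vec(\mathbf{P}\mathbf{Q}) = (\mathbf{I}\otimes\mathbf{P})\,vec(\mathbf{Q})$ and $vec(\mathbf{Q}\mathbf{S}) = (\mathbf{S}^\mathrm{T}\otimes\mathbf{I})\,vec(\mathbf{Q})$. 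The second is the companion fact that the Hadamard product commutes with column stacking, $vec(\mathbf{P}\ast\mathbf{Q}) = diag(vec(\mathbf{P}))\,vec(\mathbf{Q})$, which follows column by column from the relation $\mathbf{a}\ast\mathbf{b} = diag(\mathbf{a})\mathbf{b}$ noted in the preliminaries.

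With these in hand I would peel the left-hand side of Eq.~(\ref{eq:P1}) from the inside out, tracking the sizes of the identity blocks ($\mathbf{I}_N$ versus $\mathbf{I}_R$). Start from $vec(\mathbf{A}^\mathrm{T}\mathbf{X}) = (\mathbf{I}_R\otimes\mathbf{A}^\mathrm{T})\,vec(\mathbf{X})$. Applying the Hadamard rule, $vec(\mathbf{B}\ast(\mathbf{A}^\mathrm{T}\mathbf{X})) = diag(vec(\mathbf{B}))\,(\mathbf{I}_R\otimes\mathbf{A}^\mathrm{T})\,vec(\mathbf{X})$. Right-multiplying by $\mathbf{C}$ and vectorising prepends the factor $(\mathbf{C}^\mathrm{T}\otimes\mathbf{I}_N)$; a second Hadamard product with $\mathbf{B}$ prepends a second $diag(vec(\mathbf{B}))$; and left-multiplying by $\mathbf{A}$ prepends $(\mathbf{I}_R\otimes\mathbf{A})$. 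Combined with $vec(\mathbf{D}\mathbf{X}) = (\mathbf{I}_R\otimes\mathbf{D})\,vec(\mathbf{X})$ and linearity of $vec$, the left-hand side of Eq.~(\ref{eq:P1}) equals $\big[(\mathbf{I}_R\otimes\mathbf{D}) + (\mathbf{I}_R\otimes\mathbf{A})\,diag(vec(\mathbf{B}))\,(\mathbf{C}^\mathrm{T}\otimes\mathbf{I}_N)\,diag(vec(\mathbf{B}))\,(\mathbf{I}_R\otimes\mathbf{A}^\mathrm{T})\big]\,vec(\mathbf{X})$, which is exactly $\mathbf{H}\,vec(\mathbf{X})$ (with the last Kronecker factor read as $\mathbf{I}\otimes\mathbf{A}^\mathrm{T}$, as conformability forces). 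The right-hand side vectorises to $vec(\mathbf{E})$, and since $vec$ is a bijection from $\mathbb{R}^{M\times R}$ onto $\mathbb{R}^{MR}$, Eq.~(\ref{eq:P1}) holds if and only if Eq.~(\ref{eq:linear}) does.

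I do not anticipate a real obstacle; this is bookkeeping in Kronecker algebra rather than a substantive argument. The only places needing care are (i) conformability of the identity blocks, i.e. using $\mathbf{I}_N$ inside $\mathbf{C}^\mathrm{T}\otimes\mathbf{I}_N$ but $\mathbf{I}_R$ in the $\mathbf{I}_R\otimes\mathbf{A}$, $\mathbf{I}_R\otimes\mathbf{A}^\mathrm{T}$ and $\mathbf{I}_R\otimes\mathbf{D}$ factors; (ii) placing the transpose correctly, so that the innermost factor is $\mathbf{I}_R\otimes\mathbf{A}^\mathrm{T}$; and (iii) recognising that the two occurrences of $\mathbf{B}$ in Eq.~(\ref{eq:P1}) yield two separate $diag(vec(\mathbf{B}))$ factors straddling $(\mathbf{C}^\mathrm{T}\otimes\mathbf{I}_N)$, not a single squared one. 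It is then natural to close with the remark that in the relevant instance $\mathbf{D}=\gamma\mathbf{P}^{(v)}\succ\mathbf{0}$ while the Hadamard--Kronecker term is positive semidefinite ($\mathbf{C}$ being a Gram matrix there), so $\mathbf{H}$ is invertible and $\mathbf{X}$ is recovered from $vec(\mathbf{X}) = \mathbf{H}^{-1}vec(\mathbf{E})$, solving Eq.~(\ref{eq:gred_Wv}).
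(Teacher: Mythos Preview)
Your proposal is correct and follows essentially the same route as the paper's proof: vectorise both sides and unwind the nested expression using $vec(\mathbf{P}\mathbf{Q}\mathbf{S})=(\mathbf{S}^{\mathrm{T}}\otimes\mathbf{P})\,vec(\mathbf{Q})$ together with $vec(\mathbf{P}\ast\mathbf{Q})=diag(vec(\mathbf{P}))\,vec(\mathbf{Q})$, the only cosmetic difference being that the paper peels from the outside in while you work from the inside out. Your remark that conformability forces the innermost factor to be $\mathbf{I}\otimes\mathbf{A}^{\mathrm{T}}$ is well taken (the statement's final $\mathbf{I}\otimes\mathbf{A}$ is a typo), and your closing observation on the invertibility of $\mathbf{H}$ is exactly what the paper states immediately after the proof.
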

\begin{proof}
We can rewrite Eq.~(\ref{eq:P1}) in the vector form as 
\begin{equation}
vec(\mathbf{A} (\mathbf{B} \ast (( \mathbf{B} \ast (\mathbf{A}^{\mathrm{T}} \mathbf{X})) \mathbf{C}))) + vec(\mathbf{D} \mathbf{X}) = vec(\mathbf{E}).
\label{eq:P1_vec}
\end{equation}
For simplicity, we use the notation $\mathbf{A}_\mathbf{B} = \mathbf{A} \otimes \mathbf{B}$. Notice that $\mathbf{A}_\mathbf{B} \neq \mathbf{B}_\mathbf{A}$. By using $vec(\mathbf{A} \mathbf{X} \mathbf{B}) = (\mathbf{B}^\mathrm{T}_\mathbf{A}) vec(\mathbf{X}) $ and $\mathbf{x} \ast \mathbf{y} = diag(\mathbf{x})\mathbf{y}$, we have:
\begin{align}
& vec(\mathbf{A} (\mathbf{B} \ast (( \mathbf{B} \ast (\mathbf{A}^{\mathrm{T}} \mathbf{X})) \mathbf{C}))) \nonumber \\
= ~& {\mathbf{I}}_\mathbf{A} vec(\mathbf{B} \ast (( \mathbf{B} \ast (\mathbf{A}^{\mathrm{T}} \mathbf{X})) \mathbf{C})) \nonumber \\
= ~& {\mathbf{I}}_\mathbf{A} ( vec(\mathbf{B}) \ast vec( (\mathbf{B} \ast (\mathbf{A}^{\mathrm{T}} \mathbf{X})) \mathbf{C} ) ) \nonumber \\
= ~& {\mathbf{I}}_\mathbf{A} diag( vec(\mathbf{B}) ) vec( (\mathbf{B} \ast (\mathbf{A}^{\mathrm{T}} \mathbf{X}))\mathbf{C}) \nonumber \\
= ~& {\mathbf{I}}_\mathbf{A} diag( vec(\mathbf{B}) ) \mathbf{C}^{\mathrm{T}}_{\mathbf{I}} vec(\mathbf{B} \ast (\mathbf{A}^{\mathrm{T}} \mathbf{X})) \nonumber \\
= ~& {\mathbf{I}}_\mathbf{A} diag(vec(\mathbf{B})) \mathbf{C}^{\mathrm{T}}_{\mathbf{I}} diag(vec(\mathbf{B})) {\mathbf{I}}_{\mathbf{A}^{\mathrm{T}}} vec(\mathbf{X}).
\label{eq:P1_vec_term1}
\end{align}
\vspace{-1pt}
Similarly, we have $vec(\mathbf{D}\mathbf{X}) = {\mathbf{I}}_\mathbf{D} vec(\mathbf{X})$. Then substitute this and Eq.~(\ref{eq:P1_vec_term1}) into Eq.~(\ref{eq:P1_vec}), we arrive at Theorem \ref{theo:LS}.
\end{proof}

Using Theorem \ref{theo:LS}, letting $\mathbf{A} = \mathbf{Z}^{(v)}$, $\mathbf{B}= \mathbf{\Pi}^{(-v)}$, $\mathbf{C} = {\mathbf{W}^{(V+1)}}^{\mathrm{T}}\mathbf{W}^{(V+1)}$, $\mathbf{D} = \gamma \mathbf{P}^{(v)}$, and $\mathbf{E} = \mathbf{Z}^{(v)}( \mathbf{\Pi}^{(-v)} \ast ( \mathbf{F} \mathbf{W}^{(V+1)} ) )$, we can rewrite Eq.~(\ref{eq:gred_Wv}) as a linear system of Eq.~(\ref{eq:linear}). Then, since $\mathbf{H}$ is invertible, we have the solution in the vector form as $vec(\mathbf{W}^{(v)})= \mathbf{H}^{-1}vec(\mathbf{E})$. 

However, the computation of $\mathbf{H}$ is usually time consuming. Alternatively,  we can solve Eq.~(\ref{eq:gred_Wv}) iteratively by the conjugate gradient (CG) method, which only needs to perform matrix multiplications of Eq.~(\ref{eq:gred_Wv}), an explicit representation of the matrix $\mathbf{H}$ is not needed.





\renewcommand{\algorithmicrequire}{\textbf{Input:}}
\renewcommand{\algorithmicensure}{\textbf{Output:}}
\begin{algorithm}[t]
\caption{~Algorithm to solve the problem in Eq.~(\ref{eq:mmvc_obj})}
\label{alg:MMVC}
\small
\begin{algorithmic}[1] 
\REQUIRE Dataset $\{\mathbf{X}^{(1)}, \cdots, \mathbf{X}^{(V)}\}$, number of clusters $K$, number of factors $R$, and regularized parameter $\gamma$
\ENSURE $\{ \mathbf{W}^{(v)} \in \mathbb{R}^{(1+D_v) \times R} \}$, $\mathbf{W}^{(0)} \in \mathbb{R}^{K \times R}$, $\mathbf{F} \in \mathbb{R}^{N \times K}$
\STATE Let $t=0$. Initialize $\{\mathbf{W}^{(v)}_t\}$, $\mathbf{F}_t$ s.t. $\mathbf{F}^{\mathrm{T}}_t \mathbf{F}_t = \mathbf{I}$
\STATE Allocate $\mathbf{Z}^{(v)} = [\mathbf{1}; \mathbf{X}^{(v)}] \in \mathbb{R}^{(1+D_v) \times N}$
\WHILE{not converge}
\FOR{$v:=1~\text{to}~V$}
    \STATE Calculate the diagonal matrix $\mathbf{P}^{(v)}_{t+1}$ by $\mathbf{W}^{(v)}_t$  
    \STATE Update $\mathbf{W}^{(v)}_{t+1}$ by solving Eq.~(\ref{eq:gred_Wv})
\ENDFOR
\STATE Calculate the diagonal matrix $\mathbf{P}^{(V+1)}_{t+1}$ by $\mathbf{W}^{(V+1)}_t$
\STATE Update $\mathbf{W}^{(V+1)}_{t+1}$ by solving Eq.~(\ref{eq:opti_W0})
\STATE Calculate $\mathbf{F}_{t+1} = \mathbf{U}[\mathbf{I}; \mathbf{0} ] \mathbf{V}^\mathrm{T}$, where $\mathbf{U}$ and $\mathbf{V}$ are obtained by SVD on $\mathbf{\Pi}_t {\mathbf{W}_t^{(V+1)}}^{\mathrm{T}}$
\STATE t = t+1
\ENDWHILE
\end{algorithmic}
\end{algorithm}

\textbf{Update $\mathbf{W}^{(V+1)}$}: By keeping all other variables fixed and minimizing over $\mathbf{W}^{(V+1)}$, the optimization problem is 
\begin{equation}
\mathcal{J} = \min_{\mathbf{W}^{(V+1)}}~\| \mathbf{\Pi} {\mathbf{W}^{(V+1)}}^{\mathrm{T}} - \mathbf{F}\|^2_F + \gamma \|\mathbf{W}^{(V+1)}\|_{2,1}.
\label{eq:opti_W0_P2}
\end{equation}
Using techniques similar to above and setting the derivative of Eq.~(\ref{eq:opti_W0_P2}) w.r.t. $\mathbf{W}^{(V+1)}$ equal to zero, it yields:
\begin{align}
\mathbf{W}^{(V+1)}\mathbf{\Pi}^\mathrm{T}\mathbf{\Pi} + \gamma_2 \mathbf{P}^{(V+1)} \mathbf{W}^{(V+1)} = \mathbf{F}^\mathrm{T}\mathbf{\Pi},
\label{eq:opti_W0}
\end{align}
where $\mathbf{P}^{(V+1)}$ is a diagonal matrix with the $i$-th diagonal element as $p^{(V+1)}_{ii} = \frac{1}{2\|\mathbf{w}^{(V+1)i}\|_2}$. The Eq.~(\ref{eq:opti_W0}) is the Sylvester equation that can be solved by several numerical approaches, here we use the lyap function in MATLAB.

\textbf{Update $\mathbf{F}$}: By keeping all other variables fixed and minimizing over $\mathbf{F}$, we need to solve the following problem:
\begin{equation}
\mathcal{J} = \min_{\mathbf{F}^{\mathrm{T}} \mathbf{F} = \mathbf{I}}~ \| \mathbf{\Pi} {\mathbf{W}^{(V+1)}}^{\mathrm{T}} - \mathbf{F}\|^2_F.
\label{eq:opti_F}    
\end{equation}
The following theorem provides the analytical solution to this problem, the proof of which
can be found in \cite{wang2013multi}.
\begin{theorem}\label{theo:svd}
Given any matrix $\mathbf{A} \in \mathbb{R}^{N \times K} (K \leq N)$ and its singular value decomposition (SVD) $\mathbf{A} = \mathbf{U} \mathbf{\Lambda} \mathbf{V}^\mathrm{T}$, the solution of the optimization problem: $\min_{\mathbf{B}^{\mathrm{T}} \mathbf{B} = \mathbf{I}} \| \mathbf{A} - \mathbf{B}\|^2_F$ is given by $\mathbf{B} = \mathbf{U}[\mathbf{I}; \mathbf{0} ] \mathbf{V}^\mathrm{T}$, where $\mathbf{I}$ is the identity matrix with size $K$, $\mathbf{0} \in \mathbb{R}^{(N-K) \times K}$ is a matrix with all zeros.
\end{theorem}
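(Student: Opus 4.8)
The plan is to reduce this to the classical ``nearest matrix with orthonormal columns'' (orthogonal Procrustes) problem and resolve it by a trace argument of von Neumann type. First I would expand the objective as $\| \mathbf{A} - \mathbf{B} \|_F^2 = \| \mathbf{A} \|_F^2 - 2\, tr(\mathbf{A}^\mathrm{T}\mathbf{B}) + \| \mathbf{B} \|_F^2$. The constraint $\mathbf{B}^\mathrm{T}\mathbf{B} = \mathbf{I}$ (with $\mathbf{I}$ of size $K$) forces $\| \mathbf{B} \|_F^2 = tr(\mathbf{B}^\mathrm{T}\mathbf{B}) = K$, a constant, and $\| \mathbf{A} \|_F^2$ is independent of $\mathbf{B}$; hence minimizing $\| \mathbf{A} - \mathbf{B} \|_F^2$ over the feasible set is equivalent to maximizing $tr(\mathbf{A}^\mathrm{T}\mathbf{B})$ over all $\mathbf{B} \in \mathbb{R}^{N \times K}$ with orthonormal columns.

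Next I would substitute the (full) SVD $\mathbf{A} = \mathbf{U} \mathbf{\Lambda} \mathbf{V}^\mathrm{T}$, with $\mathbf{U} \in \mathbb{R}^{N \times N}$ and $\mathbf{V} \in \mathbb{R}^{K \times K}$ orthogonal and $\mathbf{\Lambda} \in \mathbb{R}^{N \times K}$ carrying the singular values $\sigma_1 \ge \cdots \ge \sigma_K \ge 0$ on its leading diagonal. By the cyclic property of the trace, $tr(\mathbf{A}^\mathrm{T}\mathbf{B}) = tr(\mathbf{\Lambda}^\mathrm{T} \mathbf{U}^\mathrm{T} \mathbf{B} \mathbf{V})$. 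Setting $\mathbf{Z} = \mathbf{U}^\mathrm{T} \mathbf{B} \mathbf{V} \in \mathbb{R}^{N \times K}$, orthogonality of $\mathbf{U}$ and $\mathbf{V}$ yields $\mathbf{Z}^\mathrm{T}\mathbf{Z} = \mathbf{V}^\mathrm{T} \mathbf{B}^\mathrm{T} \mathbf{B} \mathbf{V} = \mathbf{I}$, so $\mathbf{Z}$ also has orthonormal columns; in particular each column of $\mathbf{Z}$ has unit Euclidean norm, so $|z_{ii}| \le 1$ for every $i$. Since only the first $K$ columns of $\mathbf{\Lambda}^\mathrm{T}$ are nonzero and they are diagonal, $tr(\mathbf{\Lambda}^\mathrm{T}\mathbf{Z}) = \sum_{i=1}^{K} \sigma_i z_{ii} \le \sum_{i=1}^{K} \sigma_i |z_{ii}| \le \sum_{i=1}^{K} \sigma_i$, where the last step uses $\sigma_i \ge 0$. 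This bound is attained precisely when $z_{ii} = 1$ for every index $i$ with $\sigma_i > 0$; together with the unit-norm constraint on each column this forces $\mathbf{Z} = [\mathbf{I}; \mathbf{0}]$ (uniquely if $\mathbf{A}$ has full column rank), hence $\mathbf{B} = \mathbf{U} \mathbf{Z} \mathbf{V}^\mathrm{T} = \mathbf{U} [\mathbf{I}; \mathbf{0}] \mathbf{V}^\mathrm{T}$. Conversely, this $\mathbf{B}$ visibly satisfies $\mathbf{B}^\mathrm{T}\mathbf{B} = \mathbf{I}$ and achieves $tr(\mathbf{A}^\mathrm{T}\mathbf{B}) = \sum_i \sigma_i$, so it is a global minimizer.

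I would close by noting the only genuine subtlety: when some singular values vanish, the maximizer of $tr(\mathbf{A}^\mathrm{T}\mathbf{B})$ is not unique (the corresponding columns of $\mathbf{Z}$ are free subject to orthonormality), but $\mathbf{U} [\mathbf{I}; \mathbf{0}] \mathbf{V}^\mathrm{T}$ is always among the optimal solutions, so the statement holds as an attainment-of-minimum claim. I expect the main obstacle to be bookkeeping rather than mathematical depth: keeping the block shapes of $\mathbf{U}$, $\mathbf{\Lambda}$, $\mathbf{V}$ and $[\mathbf{I}; \mathbf{0}]$ mutually consistent (full versus thin SVD) and spelling out the inequality $|z_{ii}| \le 1$ and its equality case carefully — everything else is the standard specialization of the von Neumann trace inequality to orthonormal-column matrices.
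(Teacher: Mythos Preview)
Your argument is correct: it is the standard orthogonal Procrustes reduction via $\|\mathbf{A}-\mathbf{B}\|_F^2 = \|\mathbf{A}\|_F^2 - 2\,tr(\mathbf{A}^\mathrm{T}\mathbf{B}) + K$, followed by the change of variables $\mathbf{Z} = \mathbf{U}^\mathrm{T}\mathbf{B}\mathbf{V}$ and the elementary bound $\sum_i \sigma_i z_{ii} \le \sum_i \sigma_i$ with the correct identification of the equality case. The handling of the degenerate situation (zero singular values) is also appropriate.

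There is nothing to compare against in the paper itself: the authors do not prove this theorem but simply cite \cite{wang2013multi} for the proof. Your self-contained trace-inequality argument is exactly the kind of proof one would expect to find behind that citation, so you may present it as written.
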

Using Theorem \ref{theo:svd} and letting the SVD of $\mathbf{\Pi} {\mathbf{W}^{(V+1)}}^{\mathrm{T}}$ be $\mathbf{U} \mathbf{\Lambda} \mathbf{V}^\mathrm{T}$, we have the solution of Eq.~(\ref{eq:opti_F}) as $\mathbf{F} = \mathbf{U}[\mathbf{I}; \mathbf{0} ] \mathbf{V}^\mathrm{T}$.

\textbf{Convergence Analysis.}
We partition the Eq.~(\ref{eq:mmvc_obj}) into $V+2$ subproblems and each of them is a convex problem with respect to one variable. By solving the subproblems alternatively, it guarantees that we can find the optimal solution to each subproblem and finally, Algorithm \ref{alg:MMVC} can converge to a local minima of the objective function in Eq.~(\ref{eq:mmvc_obj}). The proof can be derived in a similar manner as in \cite{nie2010efficient}. \\
\textbf{Computational Analysis.}
In Algorithm \ref{alg:MMVC}, step 4 solves a SVD problem on a matrix of size $N \times K$. Because in typical clustering tasks, the number of clusters $K$ is usually small, step 4 can be computed efficiently by many off-the-shelf numerical packages. Step 6 and step 9 are computationally trivial. In step 7, instead of computing the matrix inverse with cubic complexity and explicitly forming the matrix, we can solve the problem by the CG method with superlinear complexity. Step 10 solves the Sylvester equation with cubic complexity in $R$ (in ideal case, usually it is very small value).

\section{Experiments} \label{sec:exp}
In this section, we compare the proposed {\ours} approach with eight baseline methods over three real-world datasets.

\subsection{Datasets}
\begin{itemize}
    \item \textbf{FOX} and \textbf{CNN}\footnote{https://sites.google.com/site/qianmingjie/home/datasets/}: These two datasets were crawled from FOX and CNN web news. 
    $1,523$ articles from FOX news are categorized into 4 classes, and $2,107$ articles from CNN news are categorized into 7 classes.
    Each article are represented by two views, the text view and image view.
	We filter out words with frequency less than $1\%$  which results in $2,711$ features for FOX and $1,686$ features for CNN.
	For image features, seven groups of color features and five textural features are used, which results in $ 996 $ features for both datasets.
	\item \textbf{3Sources}\footnote{http://mlg.ucd.ie/datasets/3sources.html}: The news data from three sources, BBC, Reuters, and The Guardian, consist of 948 news articles covering 416 distinct news stories in six topics. Of these stories, 169 were reported in all three sources.
    We use these 169 shared stories in our experiments.

\end{itemize}
\subsection{Comparison methods}
\begin{itemize}
 \item \textbf{SEC} is a single-view spectral embedding clustering method \cite{SEC}. It uses both local and global discriminative information and embeds the cluster assignment matrix in a linear space spanned by the high-dimensional data. 
\item \textbf{CoRegSc} is the centroid based co-regularized multi-view spectral clustering method proposed by \cite{coregSC}.
\item \textbf{AMGL} is the most recent multi-view spectral clustering method \cite{AMGL}, which learns the weight for each graph automatically without introducing additional parameters.
\item \textbf{RMKMC} is the robust multi-view K-means clustering method \cite{RMKMC}, which aims to integrate heterogeneous representations of large-scale data.
\item \textbf{MultiNMF} is the multi-view clustering method based on joint NMF \cite{multiNMF}. It formulates a joint NMF process with the constraint that pushes clustering solution of each view toward a common consensus.
\item \textbf{KCTD} is the kernel concatenation tensor decomposition approach that first construct the tensor by stacking the kernel matrices of all the views. The partial symmetric CP decomposition is then applied to extract the latent feature for the clustering \cite{shao2015clustering}.
\item \textbf{SCMV-3DT} is the one of the most recent third-order  tensor based multi-view clustering method proposed by \cite{SCMV_3DT}. 
    By using t-product based on the circular convolution, the multi-view tensor data is reconstructed by itself with sparse and low-rank penalty.
\item \textbf{{\ours}} is our proposed multi-linear multi-view clustering method. To study the effects of feature selection contributed by the sparsity-inducing norm in the model, we replace the $\ell_{2,1}$-norm by the Forbenius norm and denote this model by \textbf{{\ours}-F}.
\end{itemize}

\subsection{Experimental Setup}
In the experiments, we employ two widely used metrics to measure the clustering performance: accuracy (ACC) and normalized mutual information (NMI). Since SEC is designed for single-view data, we first concatenate all the views and then apply SEC on the concatenated views. For methods involve inter-view feature interactions, we normalize features in each view such that the sum of the squared feature values equals to one. There are two parameters in the proposed {\ours}, $R$ and $\gamma$. We empirically set $\gamma$ to $0.01$ and do a linear search for $R$ in $[10, 20, \dots, 50]$. We tune the parameters of each baseline methods using the strategy mentioned in the corresponding paper. 
For all the spectral clustering based algorithms, we construct RBF kernel matrices with kernel width $\theta$ to be the median distance among all the instances. We construct each graph by selecting 10-nearest neighbors among raw data. Each experiment was repeated for 20 times, and the average performance of each  metric  in  each  data  set  were  reported. All  experiments are conducted on machines with Intel Xeon 6-CoreCPUs of 2.4 GHz and 64 GB RAM.

\begin{table}[t]
\centering
\caption{Clustering results on three datasets (\%).}
\label{tab:exp_results}
\begin{adjustbox}{max width=2\columnwidth}
\begin{tabular}{|c|c|c|c|c|c|c|}
\hline
             & \multicolumn{2}{c|}{FOX}                            & \multicolumn{2}{c|}{CNN}                            & \multicolumn{2}{c|}{3Sources}                        \\ \hline
             & ACC             & NMI     & ACC             & NMI             & ACC             & NMI           \\ \hline
SEC          & 43.73          & 8.87        & 25.15          & 4.62              & 47.13          & 20.43          \\ \hline
CoregSC      & 54.74          & 24.37         & 27.66          & 7.73              & 56.05          & 52.44                \\ \hline
AMGL         & 42.03          & 3.52         & 23.46          & 2.99          & 37.57          & 11.13            \\ \hline
RMKMC        & 61.92          & 31.51        & 21.16          & 2.47               & 44.47          & 26.04          \\ \hline
MultiNMF     & 76.93          & 67.10         & 40.14          & 29.67             & 50.78          & 46.21            \\ \hline
KCTD         & 46.24          & 20.79          & 27.56          & 8.95         & 41.93          & 34.68        \\ \hline
SCMV-3DT     & 74.73          & 58.90         & 32.32          & 26.23         & 56.70          & 47.36             \\ \hline
MMC-F        & 71.15          & 52.77         & 56.60          & 43.92           & 51.86          & 31.14      \\ \hline
MMC         & \textbf{79.62} & \textbf{71.39} & \textbf{57.88} & \textbf{47.30} & \textbf{60.58} & \textbf{52.83} \\ 
\hline
\end{tabular}
\end{adjustbox}
\vspace{-5pt}
\end{table}

\subsection{Results}
Table~\ref{tab:exp_results} shows the clustering results for all the methods. By comparison, we have the following observations. First, the proposed {\ours} method outperforms all the other methods on all the datasets. This is mainly because {\ours} can learn the latent subspaces embedded within the full-order interactions between multiple views, while other methods fail to explore the explicit correlations among multi-view features. Moreover, it can be found that {\ours} significantly outperforms other methods on the FOX and CNN datasets. The reason behind is that both datasets consist of completely different types of features (image and text). By exploiting the inter-view feature interactions, we are able to boost the clustering performance. Further, it can be found that {\ours} always performs better than {\ours}-F, which empirically shows the effectiveness of feature selection in high-dimensional data.

\begin{figure}[t]
    \centering
    \includegraphics[width=0.6\linewidth]{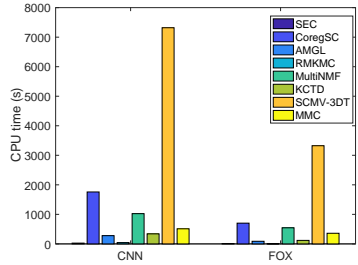}
    \caption{CPU time on large-scale datasets.}
    \label{fig:cputime}
\end{figure}

\begin{figure}[]
    \centering
    \includegraphics[width=0.7\linewidth]{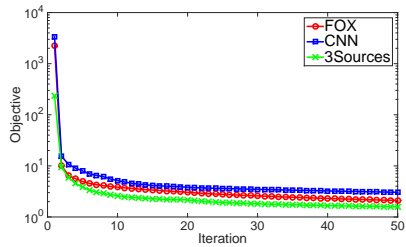}
    \caption{Objective \textit{v.s.} the number of iterations.}
    \label{fig:obj}
    \vspace{-5pt}
\end{figure}

The CPU runtime of comparison methods on large-scale datasets are reported in Figure~\ref{fig:cputime}. It is observed that {\ours} is much faster than CoregSC, MultiNMF and SCMV-3DT. The three methods that are exactly the top-$3$ baseline methods that can almost always perform better than the other baseline methods. Furthermore, the convergence speed of {\ours} on three datasets is given in Figure~\ref{fig:obj}. It shows that {\ours} algorithm can converge within few iterations.


\section{Conclusion} \label{sec:conclude}
In this paper, we introduced a novel multi-view clustering algorithm {\ours} based on tensor analysis, which can efficiently learn the underlying latent structure embedded in the full-order feature interactions among multiple views, without the need to construct the higher-order tensor data physically. Experiments on three real-world datasets demonstrate that {\ours} outperforms several state-of-the-art methods.

\section*{Acknowledgment}
This work is partially supported by NSFC (No. 61503253 and 61672313), NSF (No. IIS-1716432, IIS-1750326, IIS-1526499, IIS-1763325, IIS-1763365, and CNS-1626432), NSF-Guangdong (No. 2017A030313339), FSU through the startup package and FYAP award, and ONR N00014-18-1-2585, 1R01LM012607, 1R01AI130460, P50MH113840, R01AI116794, 7R01LM009012.

\bibliographystyle{IEEEtran}
\bibliography{reference}

\begin{thebibliography}{10}
\providecommand{\url}[1]{#1}
\csname url@samestyle\endcsname
\providecommand{\newblock}{\relax}
\providecommand{\bibinfo}[2]{#2}
\providecommand{\BIBentrySTDinterwordspacing}{\spaceskip=0pt\relax}
\providecommand{\BIBentryALTinterwordstretchfactor}{4}
\providecommand{\BIBentryALTinterwordspacing}{\spaceskip=\fontdimen2\font plus
\BIBentryALTinterwordstretchfactor\fontdimen3\font minus
  \fontdimen4\font\relax}
\providecommand{\BIBforeignlanguage}[2]{{%
\expandafter\ifx\csname l@#1\endcsname\relax
\typeout{** WARNING: IEEEtran.bst: No hyphenation pattern has been}%
\typeout{** loaded for the language `#1'. Using the pattern for}%
\typeout{** the default language instead.}%
\else
\language=\csname l@#1\endcsname
\fi
#2}}
\providecommand{\BIBdecl}{\relax}
\BIBdecl

\bibitem{chao2017survey}
G.~Chao, S.~Sun, and J.~Bi, ``A survey on multi-view clustering,'' \emph{arXiv
  preprint arXiv:1712.06246}, 2017.

\bibitem{sun2017sequential}
L.~Sun, Y.~Wang, B.~Cao, S.~Y. Philip, W.~Srisa-An, and A.~D. Leow,
  ``Sequential keystroke behavioral biometrics for mobile user identification
  via multi-view deep learning,'' in \emph{ECML-KDD}, 2017, pp. 228--240.

\bibitem{shao2016online}
W.~Shao, L.~He, C.-T. Lu, X.~Wei, and P.~S. Yu, ``Online unsupervised
  multi-view feature selection,'' in \emph{ICDM}, 2016.

\bibitem{coregSC}
A.~Kumar, P.~Rai, and H.~Daume, ``Co-regularized multi-view spectral
  clustering,'' in \emph{NIPS}, 2011, pp. 1413--1421.

\bibitem{shao2016multi}
W.~Shao, J.~Zhang, L.~He, and S.~Y. Philip, ``Multi-source multi-view
  clustering via discrepancy penalty,'' in \emph{IJCNN}, 2016, pp. 2714--2721.

\bibitem{AMGL}
F.~Nie, J.~Li, and X.~Li, ``Parameter-free auto-weighted multiple graph
  learning: A framework for multiview clustering and semi-supervised
  classification,'' in \emph{IJCAI}, 2016.

\bibitem{ding2005equivalence}
C.~H. Ding, X.~He, and H.~D. Simon, ``On the equivalence of nonnegative matrix
  factorization and spectral clustering.'' in \emph{SDM}, 2005, pp. 606--610.

\bibitem{multiNMF}
J.~Liu, C.~Wang, J.~Gao, and J.~Han, ``Multi-view clustering via joint
  nonnegative matrix factorization,'' in \emph{SDM}, vol.~13, 2013, pp.
  252--260.

\bibitem{RMKMC}
X.~Cai, F.~Nie, and H.~Huang, ``Multi-view k-means clustering on big data.'' in
  \emph{IJCAI}, 2013.

\bibitem{SEC}
F.~Nie, Z.~Zeng, I.~W. Tsang, D.~Xu, and C.~Zhang, ``Spectral embedded
  clustering: A framework for in-sample and out-of-sample spectral
  clustering,'' \emph{TNNLS}, vol.~22, no.~11, pp. 1796--1808, 2011.

\bibitem{wang2013multi}
H.~Wang, F.~Nie, and H.~Huang, ``Multi-view clustering and feature learning via
  structured sparsity,'' in \emph{ICML}, 2013, pp. 352--360.

\bibitem{SCMV_3DT}
M.~Yin, J.~Gao, S.~Xie, and Y.~Guo, ``Multiview subspace clustering via
  tensorial t-product representation,'' \emph{TNNLS}, no.~99, pp. 1--14, 2018.

\bibitem{cao2017t}
B.~Cao, L.~He, X.~Wei, M.~Xing, P.~S. Yu, H.~Klumpp, and A.~D. Leow, ``t-bne:
  Tensor-based brain network embedding,'' in \emph{SDM}, 2017, pp. 189--197.

\bibitem{sun2018mega}
L.~Sun, L.~He, Z.~Huang, B.~Cao, C.~Xia, X.~Wei, and P.~S. Yu, ``Joint
  embedding of meta-path and meta-graph for heterogeneous information
  networks,'' in \emph{ICBK}, 2018.

\bibitem{cao2014tensor}
B.~Cao, L.~He, X.~Kong, S.~Y. Philip, Z.~Hao, and A.~B. Ragin, ``Tensor-based
  multi-view feature selection with applications to brain diseases,'' in
  \emph{ICDM}, 2014, pp. 40--49.

\bibitem{lu2017multilinear}
C.-T. Lu, L.~He, W.~Shao, B.~Cao, and P.~S. Yu, ``Multilinear factorization
  machines for multi-task multi-view learning,'' in \emph{WSDM}, 2017, pp.
  701--709.

\bibitem{kolda2009tensor}
T.~G. Kolda and B.~W. Bader, ``Tensor decompositions and applications,''
  \emph{SIAM review}, vol.~51, no.~3, pp. 455--500, 2009.

\bibitem{nie2010efficient}
F.~Nie, H.~Huang, X.~Cai, and C.~H. Ding, ``Efficient and robust feature
  selection via joint $\ell_{2,1}$-norms minimization,'' in \emph{NIPS}, 2010,
  pp. 1813--1821.

\bibitem{shao2015clustering}
W.~Shao, L.~He, and S.~Y. Philip, ``Clustering on multi-source incomplete data
  via tensor modeling and factorization,'' in \emph{PAKDD}, 2015, pp. 485--497.

\end{thebibliography}
\end{document}